%% file: main.tex
\documentclass[lettersize,journal]{IEEEtran}

\input{packages.tex}

\input{macros.tex}

\begin{document}

\title{Online, Reachability-Aware, Sampling-Based Motion Planning}
\author{Brendan Gould,~\IEEEmembership{Student Member,~IEEE}, Zhiyuan Zhang,~\IEEEmembership{Student Member,~IEEE}, Panagiotis Tsiotras,~\IEEEmembership{Fellow,~IEEE}, Samuel Coogan,~\IEEEmembership{Senior Member, IEEE}
}



\maketitle

\begin{abstract}
Sampling-Based Model-Predictive Control (MPC) algorithms are a flexible class of controllers used for navigation on a wide range of robotic systems. 
Historically, such approaches have lacked hard safety guarantees, a shortcoming which we remedy in this work by computing guaranteed reachable-set overapproximations \emph{online} with a fast, interval-based pipeline. 
We show that our method achieves similar performance to a state-of-the-art reachability-based planner without the need for the expensive pre-computation step, and can be scaled to systems that are infeasible using existing approaches. 
Finally, we demonstrate that our technique reduces safety violations by over 99\% in a racing simulation and successfully controls a model racecar on real hardware experiments without crashes. 
\end{abstract}

\begin{IEEEkeywords}
Model-Predictive Control, Reachability Analysis, Safety Critical Systems
\end{IEEEkeywords}

\section{Introduction}
\label{sec:intro}
\IEEEPARstart{M}{odern} robotic systems, such as autonomous vehicles, warehouse robots, and drones, are expected to demonstrate a significant degree of independence in navigating through their environment. 
For many of these systems, a simple navigation failure may result in unacceptable danger or even harm to humans and other robots. 
Designing controllers for such \emph{safety-critical} systems therefore requires additional care to ensure that catastrophic failures can be prevented, both in theory and in practice. 

One popular framework used to tackle the problem of navigation and planning is Model-Predictive Control (MPC)~\cite{schwenzerReviewModelPredictive2021}, which uses a model of the system dynamics to predict future states and select inputs that induce desirable trajectories.
This approach is notable for its flexibility; compared to classical approaches such as the linear-quadratic regulator, MPC places significantly less restrictive assumptions on the system dynamics and cost function. 
MPC algorithms have been applied to successfully control a wide variety of systems, including quadrotors~\cite{romeroModelPredictiveContouring2022}, legged robots~\cite{mastalliCrocoddylEfficientVersatile2020}, and aquatic vehicles~\cite{liGaussianProcessBased2024}.
One popular approach of MPC is the Model-Predictive Path Integral (MPPI) algorithm~\cite{williamsInformationTheoreticModelPredictive2018}, which controls a system by randomly sampling a set of control signals up to a finite horizon and assigning weights to each sample to minimize the Kullback-Leibler divergence between the sampled and optimal distributions. 

Standard MPPI does not consider safety constraints, making it unsuitable for use with safety-critical systems.
Prior work has proposed several methods to address this shortcoming. 
Some works incorporate safety into the cost function by penalizing trajectories that violate constraints or incur high levels of risk~\cite{yinRiskAwareModelPredictive2023,zhangRAPAPlannerRobustEfficient2025,mohamedEfficientMPPITrajectory2025}. 
Others attempt to modify the distribution of sampled controls to balance safety and performance~\cite{wangMppiDbasSafeTrajectory2025}. 
These techniques mitigate failure probabililty, but do not provide deterministic safety guarantees.
Alternatively, some approaches append a safety \emph{filter} (such as a control barrier function (CBF)~\cite{amesControlBarrierFunctions2019}) to convert a possibly unsafe control into a safe one after sampling and weighting~\cite{yinShieldModelPredictive2023}. 
This idea is theoretically appealing, but designing a CBF for practical systems is often non-obvious, and filter interventions can affect performance in counter-intuitive ways~\cite{mestresControlBarrierFunctionbased2026}

Reachable-set over-approximation provides rigorous safety guarantees by bounding the set of states the system can reach at a given time~\cite{harwoodEfficientPolyhedralEnclosures2016,ARCH15:Introduction_CORA_2015,bansalHamiltonJacobiReachabilityBrief2017}. 
Historically, such techniques have been too computationally expensive for online planning, and used instead for offline verification.
A recent toolbox, \texttt{immrax}~\cite{immrax}, was designed for fast and scalable reachability and addresses this issue with a combination of theoretical advancements~\cite{gouldAutomaticScalableSafety2025,harapanahalliParametricReachableSets2025} and a state-of-the-art compuational implementation using the high-performance computing library JAX~\cite{jax2018github}.
This, and other advances, have brought reachability-aware sampling planners within reach. 

Recent work has started to take advantage of this possibility, most notably~\cite{borquezDualGuardMPPISafe2025} which leverages Hamilton-Jacobi (HJ) reachability to guarantee safety under bounded disturbances. 
To do this, it pre-computes the HJ value function on a discretized grid over the state space, and interpolates between grid points at runtime to identify both the best safety-preserving control at any state and the last moment in time when such interventions can guarantee collision avoidance. 
The main weakness of such an approach is the need to solve a partial differential equation in advance, which becomes infeasible as system state dimension grows due to the curse of dimensionality~\cite{borquezDualGuardMPPISafe2025} and, in addition, assumes that the system and environment are fully known beforehand. 

In this work, we provide a similar guarantee to that of~\cite{borquezDualGuardMPPISafe2025} while removing the expensive pre-computation. 
Through conservative, but fast, interval-based reachability using \texttt{immrax}, our algorithm performs all reachability calculations at runtime. 
The contributions of this work are threefold: 
\begin{itemize}
    \item A novel reachable-set aware sampling-based controller and its accompanying safety guarantee (Section~\ref{subsec:interval_motion_planning}). 
    \item A thorough empirical validation of the proposed controller both in simulation (Section~\ref{subsec:simulation}) and on a \nicefrac{1}{28}th scale vehicle platform (Section~\ref{subsec:real_world}).
    \item A two-way comparison to the state-of-the-art sampling-based planner~\cite{borquezDualGuardMPPISafe2025} (Section~\ref{subsec:comparison}), where our algorithm demonstrates similar results without any pre-computation. 
\end{itemize}

\subsection{Notation}
\label{subsec:Notation}

For vectors $x, y \in \R^n$, we use the element-wise partial order $\le$. 
Whenever $x \le y$, let $\interval{x}{y} = \{ z \in \R^n \mid x \le z \le y \}$, and $\IR^{n}$ be the set of all such intervals.
The $\min$ and $\max$ operators on a finite collection of vectors are defined component-wise, and we abbreviate
\begin{equation}
    \label{eq:def_clip}
    \clip\left(x; \interval{x}\right) := \min\{\max\{ x, \lb{x} \}, \ub{x} \}.
\end{equation}
For any compact set $K \subseteq \mathbb{R}^n$, let $\dist(x, K)$ be the minimum distance between $x$ and any point in $K$, and let $\boundary K$ be the boundary of $K$. 
For a continuous time function $p : [a, b] \to \R^n$, we use $\trajC{p}{c}$ to denote its value at $c \in [a, b]$, and $\trajC{p}$ to denote the entire function. 
Similarly, we use $\trajD{p}{c}$ and $\trajD{p}$ for discrete time signals.
With a mild abuse of notation, we use the same symbol for both a discrete time function and the continuous time analogue formed by applying a zero-order hold between the function arguments.

\section{Problem Statement}
\label{sec:problem_statement}

Consider a continuous time, non-linear system with state $\state \in \R^\stateDim$, control input $\control \in \R^\controlDim$, and disturbance input $\err \in \R^\errDim$ subject to the dynamics
\begin{equation}
    \label{eq:general_dynamics}
    \dot{\state} = \dyn{\state}{\control}{\err}.
\end{equation}
The disturbance $\err$ is used to explicitly consider possible differences between an idealized, mathematical model and the real-world system.
For a given time horizon $\timeHorizon$, input \emph{signals} $\trajC{\control} \in \functionSet{\interval{0}{\timeHorizon}}{\R^\controlDim}$ and $\trajC{\err} \in \functionSet{\interval{0}{\timeHorizon}}{\R^\errDim}$ are Lebesgue measurable functions from any time instant before that horizon to a particular input value. 

\begin{assumption}
    \label{asm:traj_exist_unique}
    The vector field $\dyn$ satisfies the necessary regularity conditions such that the solutions to~\eqref{eq:general_dynamics} exist and are unique on the time interval $\interval{0}{\timeHorizon}$ for all initial conditions $\stateInit \in \R^\stateDim$ and input signals $\trajC{\control}$, $\trajC{\err}$.
    We denote this solution by $\traj{\timeInstant}{\stateInit}{\trajC{\control}}{\trajC{\err}}$.
\end{assumption}

Additionally, as we treat the disturbances as adversarial in nature, we require them to be bounded.
\begin{assumption}
    \label{asm:bounded_disturbance}
    There exists an interval $\interval{\err}$ such that $\trajC{\err}{\timeInstant} \in \interval{\err}$ for all $\timeInstant \in \interval{0}{\timeHorizon}$.
\end{assumption}

Our design goal is twofold and consists of both safety and performance components. 
To model safety, we use a function $\safetyFn : \R^\stateDim \to \R$ encoding a set of safe states 
\begin{equation}
    \label{eq:safeStates}
    \safeStates := \left\{ \state \mid \safetyFn(\state) \ge 0 \right\}.
\end{equation}
This framework is particularly suited for e.g. collision avoidance, where states with $\safetyFn{\state} \ge 0$ are not in collision and states with $\safetyFn{\state} < 0$ are.
At any $\stateInit$, safe control signals are those that produce state trajectories contained in $\safeStates$ for the next $\timeHorizon$ time units under any realization of $\err$: 
\begin{align}
    \label{eq:safeControls}
    \safeControls{\stateInit} & \defeq \bigg\{ \trajC{\control} \mid \nonumber \\
    & \quad \inf_{\trajC{\err}, \timeInstant \in \interval{0}{\timeHorizon}} \safetyFn{\traj{\timeInstant}{\stateInit}{\trajC{\control}}{\trajC{\err}}} \ge 0 \bigg\}.
\end{align}
Our primary design goal is to construct a safe control. 
\begin{problem}
    \label{prob:safe_control}
    Given the current system state $\stateInit$, identify a safe control signal $\trajC{\control} \in \safeControls{\stateInit}$, or verify that $\safeControls{\stateInit}$ is empty.
\end{problem}

In addition to satisfying safety constraints, we often wish to design controls that achieve some objective. 
We model such objectives with a cost functional $\costFunctional : \functionSet{\interval{0}{\timeHorizon}}{\R^\stateDim} \times \functionSet{\interval{0}{\timeHorizon}}{\R^\controlDim} \to \R$ that indicates which states and control actions are desirable. 
\begin{problem}
    \label{prob:optimal_control}
    Given the current system state $\stateInit$, select an optimal control from the set of safe controls. 
    That is, find
    \begin{equation}
        \label{eq:optimal_control}
        \trajC{\controlRef} \in \argmin_{\trajC{\control} \in \safeControls{\stateInit}} \costFunctional{\traj{\cdot}{\stateInit}{\trajC{\control}}{\trajC{0}}}{\trajC{\control}}.
    \end{equation}
\end{problem}

In the next section, we present our solution to both of these problems.


\section{Methodology}
\label{sec:methodology}

This section presents our theoretical contributions, which are organized as follows. 
Section~\ref{subsec:reachability} summarizes the established reachability technique used in our work, and Section~\ref{subsec:motion_planning} introduces a sampling-based motion planner. 
These ideas are combined in Section~\ref{subsec:interval_motion_planning}, which contains both our proposed controller (Algorithm~\ref{alg:reachset_planner}) and its corresponding safety guarantee (Theorem~\ref{thm:robustly_safe_control}). 
Finally, Section~\ref{subsec:limitations} describes remaining opportunities for future work to better address Problems~\ref{prob:safe_control}--\ref{prob:optimal_control}.

\subsection{Interval Reachability}
\label{subsec:reachability}

We leverage an interval-based reachability method developed in prior work~\cite{shenRapidAccurateReachability2017}; it is summarized here for convenience. 
This method uses an \emph{embedding system} whose trajectories define a time-varying interval set that is guaranteed to contain the trajectories of the original system, subject to bounded disturbances. 
Because of the simplicity of intervals, these bounds can be computed efficiently, at approximately 2\texttimes{} the cost of a trajectory of the original system. 

The primary technique used to over-approximate the effect of disturbances on system trajectories is that of an inclusion function from interval arithmetic~\cite{jaulinAppliedIntervalAnalysis2001}. 
\begin{definition}[Inclusion Function]
    \label{def:inclusion_fn}
    Let $f : \R^m \to \R^n$. 
    We say $\interval{f} : \IR^m \to \IR^n$ is an \emph{inclusion function} for $f$ if for any $x \in \interval{x}$, $f(x) \in \interval{f(\interval{x})}$.
    This notation has three components; a lower bound $\lb{f} : \IR^m \to \R^n$, an upper bound $\ub{f} : \IR^m \to \R^n$, and a combined $\interval{f} : \IR^m \to \IR^n$ which outputs the interval between the lower and upper bounds. 
\end{definition}
Definition~\ref{def:inclusion_fn} generalizes to functions with multiple inputs naturally, requiring containment of the output for any combination of inputs within the specified intervals. 
Importantly, inclusion functions are not required to tightly contain their outputs (they may be conservative), nor unique. 

As intuition may suggest, it is possible to over-approximate the trajectories of~\eqref{eq:general_dynamics} by simply replacing $\dyn$ with one of its inclusion functions. 
However, a low-cost additional step can be used to greatly reduce the conservatism of such approximations by leveraging the structure of the continuous time dynamics. 
Geometrically, if any trajectory were to exit an interval $\interval{\state}$, it must first pass through one of the faces of $\interval{\state}$. 
We introduce the following notation to describe these faces:
\begin{align}
    \lb{\flatten{\componentIndex}{\interval{\state}}} & := \{\state \mid \state \in \interval{\state} \land \component{\state} = \lb{\state}_k \}, \\
    \ub{\flatten{\componentIndex}{\interval{\state}}} & := \{\state \mid \state \in \interval{\state} \land \component{\state} = \ub{\state}_k \}.
\end{align}

Then, instead of over-approximating the expansion due to $\dyn$ over the entire state interval, we need only consider how quickly states that lie on a face of $\interval{\state}$ are expanding. 
\begin{definition}[Embedding System]
    \label{def:embedding_sys}
    Consider the dynamical system~\eqref{eq:general_dynamics}. 
    Let $\interval{\dyn}$ be an inclusion function for $\dyn$. 
    The corresponding \emph{embedding system} has state $\interval{\state}$ with dynamics given by 
    \begin{equation}
    \label{eq:dyn_embedding}
        \begin{bmatrix}
            \dot{\lb{\state}}_k \\ 
            \dot{\ub{\state}}_k
        \end{bmatrix} =
        \begin{bmatrix}
            \lb{\component{\dyn{\lb{\flatten{\componentIndex}{\interval{\state}}}}{\interval{\control}{\control}}{\interval{\err}}}} \\
            \ub{\component{\dyn{\ub{\flatten{\componentIndex}{\interval{\state}}}}{\interval{\control}{\control}}{\interval{\err}}}}
        \end{bmatrix},
    \end{equation}
    for all $\componentIndex \in \{1, \ldots, \stateDim\}$.
\end{definition}
Note that each face of an interval is itself an interval set, so~\eqref{eq:dyn_embedding} is well-defined.
Let $\embTraj{\timeInstant}{\stateInit}{\trajC{\control}}$ denote the solution of~\eqref{eq:dyn_embedding} from the initial condition $\interval{\stateInit}{\stateInit}$, where $\lb{\LBname{\Phi}}$ and $\ub{\LBname{\Phi}}$ individually give the lower and upper bounds of the interval trajectory.
We have suppressed the dependence on error bounds $\interval{\err}$ in the notation to emphasize the fact that such trajectories are independent of any \emph{realization} of error signal within those bounds. 
As desired, the time varying interval $\interval{\LBname{\Phi}}$ is guaranteed to contain trajectories of the original system under bounded disturbances.

\begin{proposition}[{\cite[Theorem 1]{shenRapidAccurateReachability2017}}]
    \label{prop:embedding_containment}
    For any initial state $\stateInit$, control signal $\trajC{\control} \in \functionSet{\interval{0}{\timeHorizon}}{\R^\controlDim}$, and disturbance signal $\trajC{\err} \in \functionSet{\interval{0}{\timeHorizon}}{\R^\errDim}$ satisfying Assumption~\ref{asm:bounded_disturbance}, we have, for all $\timeInstant \in \interval{0}{\timeHorizon}$, 
    \begin{equation}
        \label{eq:embedding_containment}
        \traj{\timeInstant}{\stateInit}{\trajC{\control}}{\trajC{\err}} \in \embTraj{\timeInstant}{\stateInit}{\trajC{\control}}.
    \end{equation}
\end{proposition}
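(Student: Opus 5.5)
The plan is a standard first-exit (comparison) argument for continuous-time interval embeddings. Fix $\stateInit$, $\trajC{\control}$ and a disturbance signal $\trajC{\err}$ with values in $\interval{\err}$. Write $x(t)$ for the true trajectory $\traj{\timeInstant}{\stateInit}{\trajC{\control}}{\trajC{\err}}$ and $[\underline{y}(t),\overline{y}(t)]$ for the embedding trajectory $\embTraj{\timeInstant}{\stateInit}{\trajC{\control}}$. At $t=0$ we have $\underline{y}(0)=x(0)=\overline{y}(0)=\stateInit$, so containment holds initially. The goal is to show that the set of times at which $x(t)\in[\underline{y}(t),\overline{y}(t)]$ is all of $[0,\timeHorizon]$.

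The key structural fact is the face property of~\eqref{eq:dyn_embedding}. Suppose at some time $t$ we have $x(t)\in[\underline{y}(t),\overline{y}(t)]$ and $x_k(t)=\underline{y}_k(t)$. Then $x(t)$ lies on the lower face $\lb{\flatten{\componentIndex}{\interval{\state}}}$ of the current interval, and $\err(t)\in\interval{\err}$. By Definition~\ref{def:inclusion_fn}, $\dyn_k(x(t),\control(t),\err(t))\ge \dot{\underline{y}}_k(t)$. The symmetric statement holds on upper faces. This says that along any face where the true state touches the boundary, the true trajectory moves outward no faster than the bound does.

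To turn this into a rigorous argument without relying on a strict inequality, I would use a perturbation. Consider the $\epsilon$-inflated embedding, whose lower dynamics are decreased by $\epsilon$ and upper dynamics increased by $\epsilon$, started from $[\stateInit-\epsilon\mathbf 1,\ \stateInit+\epsilon\mathbf 1]$. The proof for this system has three steps.
\begin{itemize}
\item Define $\tau=\inf\{t: x(t)\notin \text{interior of the inflated interval}\}$ and suppose $\tau\le\timeHorizon$.
\item At $\tau$, some component sits on a face, say $x_k(\tau)=\underline{y}^\epsilon_k(\tau)$, while $x(t)$ lies strictly inside on $[0,\tau)$.
\item Take an integrated (Lebesgue) version of the face inequality over a short interval before $\tau$. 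Using continuity of the inclusion-function bounds in the face, this yields $x_k(\tau)-\underline{y}^\epsilon_k(\tau) \ge$ a positive quantity of order $\epsilon\delta$ for small $\delta$, which contradicts equality.
\end{itemize}
Hence the inflated tube contains $x$ on $[0,\timeHorizon]$. Letting $\epsilon\to0$ and using continuous dependence of solutions of the embedding system on parameters and initial data gives~\eqref{eq:embedding_containment}.

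The main obstacle is the regularity needed in the last two steps. Because $\control$ and $\err$ are only measurable, derivatives exist only almost everywhere, so the face inequality must be used in integral form. That in turn requires the inclusion function (and hence the embedding vector field) to be locally Lipschitz or at least continuous in the interval arguments. The same property is needed for the embedding solution to exist, be unique, and depend continuously on $\epsilon$. I would state this as an implicit standing assumption, consistent with Assumption~\ref{asm:traj_exist_unique} applied to the embedding system. Alternatively, one can invoke the Müller/Kamke comparison theorem for quasi-monotone systems directly, which is essentially the cited result of~\cite{shenRapidAccurateReachability2017}.
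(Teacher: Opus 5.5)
The paper gives no proof of its own. It imports \cite[Theorem 1]{shenRapidAccurateReachability2017}, a M\"uller-type differential-inequality theorem. Its only hypotheses on the bounds are that $\underline{y}\le\overline{y}$ are absolutely continuous, bracket the initial state, and satisfy, for a.e.\ $t$, $\dot{\underline{y}}_k(t)\le f_k(z,\mathbf{u}(t),w)$ for every $z$ on the current lower $k$-face and every $w$ in the disturbance bound (symmetrically on upper faces). The embedding of Definition~\ref{def:embedding_sys} meets these by the inclusion property alone. Results of this kind are proved directly on the unperturbed bounds using only a local Lipschitz condition on $f$. For example, project $x(t)$ onto the box with $\clip$ and show that the total violation $\delta(t)=\sum_k\max\{0,\underline{y}_k(t)-x_k(t)\}+\max\{0,x_k(t)-\overline{y}_k(t)\}$ satisfies $\dot\delta\le\beta\delta$ a.e.\ on $\{\delta>0\}$, with $\beta$ proportional to the Lipschitz constant $L$; a Gronwall-type lemma then gives $\delta\equiv0$.

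You instead perturb the embedding \emph{system} to make the inequalities strict, then pass to the limit. That is correct and self-contained, but it moves the regularity burden from $f$ to $[f]$. As you note yourself, the limit $\epsilon\to0$ needs well-posedness and continuous dependence of the embedding, hence continuity (in practice Lipschitz continuity) of the inclusion function in its interval arguments. The cited route needs only Lipschitz continuity of $f$, and nothing of $[f]$ beyond existence of the bounding trajectory. So it covers arbitrary inclusion functions, including heuristic and possibly discontinuous ones such as the model switching of Remark~\ref{rmk:interval_model_switching}, and indeed any functions satisfying the face inequalities. You can keep your $\epsilon$ device without that cost by inflating the \emph{bounds} rather than the system, e.g.\ $\underline{y}(t)-\epsilon e^{2Lt}\mathbf{1}$ and $\overline{y}(t)+\epsilon e^{2Lt}\mathbf{1}$. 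Lipschitz continuity of $f$ makes the face inequalities strict for these, and the limit is then trivial.

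Two smaller points. First, in your third step $x(t)$ is strictly interior on $[\tau-\delta,\tau)$, so the face inequality as you stated it does not apply there and must be transferred. One option is $f_k(x(t),\mathbf{u}(t),\mathbf{w}(t))\ge f_k(\hat x(t),\mathbf{u}(t),\mathbf{w}(t))-L\,(x_k(t)-\underline{y}^\epsilon_k(t))$, where $\hat x(t)$ is $x(t)$ with its $k$th entry replaced by $\underline{y}^\epsilon_k(t)$, a point of the face. The other is to apply the inclusion bound on the slab $\{z\in[\underline{y}^\epsilon(t),\overline{y}^\epsilon(t)] : z_k\le x_k(t)\}$ and use continuity of $\underline{f}_k$ as that slab collapses onto the face. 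Second, the classical result matching the cited theorem is M\"uller's theorem, which needs no quasi-monotonicity of $f$; that is exactly what the face construction buys. Kamke's theorem for quasi-monotone systems would require a property that $f$ need not have.
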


\begin{remark}
    \label{rmk:immrax_auto_if}
    Application of Proposition~\ref{prop:embedding_containment} requires the computation of an inclusion function $\interval{\dyn}$ for the system dynamics. 
    The software toolbox \texttt{immrax}~\cite{immrax} can automatically compute such a $\interval{\dyn}$ for any dynamics that are \emph{factorable} into compositions of operations in a library of primitive mathematical functions. 
    In Section~\ref{sec:experiments}, we use \texttt{immrax} to construct $\interval{\dyn}$ for a 6D dynamic bicycle vehicle model with integrated engine and tire slip models. 
    We also demonstrate heuristics that can be used as computationally speedy alternatives for operations that are expensive to extend to intervals. 
\end{remark}

\subsection{Trajectory-Based Motion Planning}
\label{subsec:motion_planning}

Our algorithm follows a receding-horizon, sampling-based paradigm inspired by those used in prior work~\cite{williamsInformationTheoreticModelPredictive2018,yinShieldModelPredictive2023,borquezDualGuardMPPISafe2025}. 
As a prelude to our proposed, reachability-aware planner, we now present a simplified approach. 
This section serves to establish notation and provide a benchmark comparison (see Section~\ref{subsec:simulation}) that isolates the benefits of reachability analysis. 

First, $\numControlSamples$ candidate control signals are sampled from a Gaussian distribution centered around a reference $\trajD{\controlRef}$.
We consider piecewise-constant policies with uniform time discretization $\controlDt$ and $\predictionHorizon$ segments, for a prediction horizon of $\timeHorizon = \predictionHorizon \controlDt$.
After sampling, the controls are smoothed and clipped to enable low-level tracking.
For precise implementation details, please see the project \href{https://github.com/3cM2VCvv/2026-RAL-sampling}{github repository}. 

Once a batch of controls is sampled, the system dynamics~\eqref{eq:general_dynamics} are integrated for each sample (assuming zero disturbance), yielding a set of nominal state trajectories. 
Each nominal trajectory is checked for predicted safety violations and the remaining trajectories are evaluated using the cost functional:
\begin{align}
    \maybeSafeSamples{\stateInit}  \defeq &\bigg\{ \sampleIndex \in \{1, \ldots, \numControlSamples\} \mid \nonumber \\
    & \hspace*{-5mm} \inf_{\timeInstant \in \interval{0}{\timeHorizon}} \safetyFn{\traj{\timeInstant}{\stateInit}{\sample{\trajD{\control}}}{\trajC{0}}} \ge 0 \bigg\} \label{eq:maybe_safe_samples} \\
  \sampleIndexReference \defeq \argmin_{\sampleIndex \in \maybeSafeSamples{\stateInit}} &\costFunctional{\traj{\timeInstant}{\stateInit}{\sample{\trajD{\control}}}{\trajC{0}}}{\sample{\trajD{\control}}} \label{eq:min_cost_selection}
\end{align}
Finally, the control differences corresponding to the optimal sampled control $\sample{\trajD{\controlDiff}}{\sampleIndexReference}$ 
are saved and used as the reference to ``warm-start'' the next control loop. 
The full strategy is given in Algorithm~\ref{alg:sample_planner}.


\begin{algorithm}[H]
\caption{Sampling-Based Planner (Trajectories)}
\label{alg:sample_planner}
\begin{algorithmic}[1]
\Require Reference control sequence $\trajD{\controlRef}$
\Function{Sample}{$\trajD{\controlRef}$}
    \State $\trajD{\controlRef} \gets \begin{bmatrix} \trajD{\controlRef}{2:\predictionHorizon} & 0 \end{bmatrix}$ \Comment{progress time}
    \State $\sample{\trajD{\control}}{1:\numControlSamples} \gets$ sample controls around $\trajD{\controlRef}$
    \State \Return $\sample{\trajD{\control}}{1:\numControlSamples}$
\EndFunction
\Function{Rollout}{$\trajD{\control}$, $\stateInit$}
    \State \Return $\traj{\cdot}{\stateInit}{\trajD{\control}}{\trajC{0}}$ by Euler integration
\EndFunction
\Function{SelectBest}{$\sample{\trajC{\state}}{1:\numControlSamples}, \sample{\trajD{\control}}{1:\numControlSamples}$}
    \State $\maybeSafeSamples{\stateInit} \gets$ predicted safe samples from~\eqref{eq:maybe_safe_samples}
    \State \Return Optimal sample index as in~\eqref{eq:min_cost_selection}
\EndFunction
\State
\State $\sample{\trajD{\control}}{1:\numControlSamples} \gets$ \textproc{Sample}$(\trajD{\controlRef})$
\State $\sample{\trajC{\state}}{1:\numControlSamples} \gets$ \textproc{Rollout}$(\sample{\trajD{\control}}, \stateInit)$ for $\sample{\trajD{\control}} \in \sample{\trajD{\control}}{1:\numControlSamples}$
\State $\sampleIndexReference \gets$ \textproc{SelectBest}$\left(\sample{\trajC{\state}}{1:\numControlSamples}, \sample{\trajD{\control}}{1:\numControlSamples}\right)$
\State Apply $\sample{\trajD{\control}{1}}{\sampleIndexReference}$, save $\sample{\trajD{\control}}{\sampleIndexReference}$
\end{algorithmic}
\end{algorithm}

\subsection{Reachability-Aware Motion Planning}
\label{subsec:interval_motion_planning}

We now present our main theoretical contribution: a generalization of Algorithm~\ref{alg:sample_planner} that leverages the reachable sets produced by Proposition~\ref{prop:embedding_containment} to robustly guarantee safety in the presence of disturbances. 

Instead of integrating the dynamics directly and ignoring the possibility of disturbances, we compute trajectories of the \emph{embedding} system corresponding to each sampled control to over-approximate the reachable sets for \emph{any} realizable disturbance. 
This allows us to robustly check the safety of the sampled control, as opposed to~\eqref{eq:maybe_safe_samples}, which only considered the nominal trajectory.
\begin{theorem}
    \label{thm:robustly_safe_control}
    Let $\interval{\safetyFn}$ be any inclusion function for $\safetyFn$, and consider the set
    \begin{align}
        \safeSamples{\stateInit} & \defeq \bigg\{ \sampleIndex \in \{1, \ldots, \numControlSamples \} \mid \nonumber \\ 
        & \quad \inf_{\timeInstant \in \interval{0}{\timeHorizon}} \lb{\safetyFn{\embTraj{\timeInstant}{\stateInit}{\sample{\trajD{\control}}}}} \ge 0 \bigg\}. \label{eq:guaranteed_safe_samples}
    \end{align}
    All such sampled control sequences are safe controls: 
    \begin{equation}
        \label{eq:guaranteed_safety}
        \left\{ \sample{\trajD{\control}} \mid \sampleIndex \in \safeSamples{\stateInit} \right\} \subseteq \safeControls{\stateInit}.
    \end{equation}
\end{theorem}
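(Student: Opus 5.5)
The plan is to fix an arbitrary index $\sampleIndex \in \safeSamples{\stateInit}$ and show directly that the corresponding sampled control $\sample{\trajD{\control}}$ (viewed, as in the notation section, as a continuous-time signal via zero-order hold, hence Lebesgue measurable) satisfies the defining inequality in~\eqref{eq:safeControls}. That is, we show that for every admissible disturbance signal $\trajC{\err}$ and every $\timeInstant \in \interval{0}{\timeHorizon}$ we have
\[
\safetyFn{\traj{\timeInstant}{\stateInit}{\sample{\trajD{\control}}}{\trajC{\err}}} \ge 0 .
\]
Taking the infimum over $\trajC{\err}$ and $\timeInstant$ then gives a value $\ge 0$, which is the claim.

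The argument is a two-step chain of containments.

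\textbf{Step 1 (containment of the true trajectory).} Fix a disturbance signal $\trajC{\err}$ satisfying Assumption~\ref{asm:bounded_disturbance} and a time $\timeInstant \in \interval{0}{\timeHorizon}$. Proposition~\ref{prop:embedding_containment} gives
\[
\traj{\timeInstant}{\stateInit}{\sample{\trajD{\control}}}{\trajC{\err}} \in \embTraj{\timeInstant}{\stateInit}{\sample{\trajD{\control}}}.
\]
Assumption~\ref{asm:traj_exist_unique} guarantees that this trajectory is well defined.

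\textbf{Step 2 (pushing through the inclusion function).} Apply Definition~\ref{def:inclusion_fn} to $\interval{\safetyFn}$ with the interval argument $\embTraj{\timeInstant}{\stateInit}{\sample{\trajD{\control}}}$. Since the true state lies in that interval, we get
\[
\safetyFn{\traj{\timeInstant}{\stateInit}{\sample{\trajD{\control}}}{\trajC{\err}}} \ge \lb{\safetyFn{\embTraj{\timeInstant}{\stateInit}{\sample{\trajD{\control}}}}} .
\]
The right-hand side is bounded below by its infimum over $\timeInstant \in \interval{0}{\timeHorizon}$. By membership $\sampleIndex \in \safeSamples{\stateInit}$, that infimum is $\ge 0$.

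The right-hand side of the inequality in Step 2 does not depend on $\trajC{\err}$. The bound therefore holds uniformly over all disturbance realizations and all times, which yields~\eqref{eq:guaranteed_safety}.

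The main point requiring care is not analytic but definitional. The set $\safeControls{\stateInit}$ quantifies over disturbance signals, and the infimum there should be read as ranging only over signals satisfying Assumption~\ref{asm:bounded_disturbance}. Those are exactly the signals to which Proposition~\ref{prop:embedding_containment} applies, so this restriction must be stated explicitly. One should also note that the embedding system is driven by the same zero-order-hold control, so that the proposition applies to the piecewise-constant sample. Beyond this, the proof is a direct composition of the two containment properties.
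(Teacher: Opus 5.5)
Your proposal is correct and follows essentially the same argument as the paper: Proposition~\ref{prop:embedding_containment} places every disturbed trajectory inside the embedding-system interval, and the inclusion-function lower bound (Definition~\ref{def:inclusion_fn}) is then chained with the infimum condition defining the safe sample set. Your added remarks make explicit two points the paper leaves implicit: the disturbance infimum should range only over signals satisfying Assumption~\ref{asm:bounded_disturbance}, and the sampled control should be read through the zero-order hold. Neither changes the route.
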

\begin{proof}
    Consider any $\sample{\trajD{\control}} \in \safeSamples{\stateInit}$.
    Proposition~\ref{prop:embedding_containment} guarantees that its embedding system trajectory contains any solution of~\eqref{eq:general_dynamics} that could be realized by some disturbance signal. 
    Therefore, by Definition~\ref{def:inclusion_fn} we have, for all $\timeInstant \in \interval{0}{\timeHorizon}$,
    \begin{align*}
        0 & \le \lb{\safetyFn{\embTraj{\timeInstant}{\stateInit}{\sample{\trajD{\control}}}}} \\
          & \le \safetyFn{\traj{\timeInstant}{\stateInit}{\sample{\trajD{\control}}}{\trajC{\err}}},
    \end{align*}
    implying $\sample{\trajD{\control}} \in \safeControls{\stateInit}$. 
\end{proof}

Once the safety of a control signal has been verified by Theorem~\ref{thm:robustly_safe_control}, it can be applied directly with the knowledge that no disturbance will ever cause a safety violation. 
This bypasses the need to account for disturbances with an ``averaging step'' (as in MPPI~\cite{williamsInformationTheoreticModelPredictive2018}) and motivates the use of the best sampled signal that is safe. 
Furthermore, the predicted interval trajectories give a much richer information structure to quantify which controls are best through a cost functional $\costFunctionalInt : \functionSet{\interval{0}{\timeHorizon}}{\IR^\stateDim} \times \functionSet{\interval{0}{\timeHorizon}}{\R^\controlDim} \to \R$.
For example, given any inclusion function $\interval{\costFunctional}$, the choice
\begin{align}
    \costFunctionalInt{\trajC{\interval{\state}}}{\trajC{\control}} & \defeq \ub{\costFunctional{\trajC{\interval{\state}}}{\trajC{\interval{\control}{\control}}}}, \label{eq:interval_cost_functional} \\ 
    \sampleIndexReference = \argmin_{\sampleIndex \in \safeSamples{\stateInit}} & \costFunctionalInt{\embTraj{\cdot}{\stateInit}{\trajD{\control}}}{\trajD{\control}}. \label{eq:robust_min_cost_selection}
\end{align}
would optimize a guaranteed (possibly conservative) upper bound for the cost incurred by any disturbance realization. 

A reachable-set aware planner can then be easily constructed by replacing~\eqref{eq:maybe_safe_samples} with~\eqref{eq:guaranteed_safe_samples} and~\eqref{eq:min_cost_selection} with~\eqref{eq:robust_min_cost_selection} in Algorithm~\ref{alg:sample_planner}.
The resulting logic is outlined in Algorithm~\ref{alg:reachset_planner}.

\begin{algorithm}[H]
\caption{Sampling-Based Planner (Reachable Sets)}
\label{alg:reachset_planner}
\begin{algorithmic}[1]
\Require Reference control sequence $\trajD{\controlRef}$
\Function{RolloutReachset}{$\trajD{\control}$, $\stateInit$}
    \State \Return $\embTraj{\cdot}{\stateInit}{\trajD{\control}}$ by Euler integration
\EndFunction
\Function{SelectBestReachset}{$\sample{\trajC{\interval{\state}}}{1:\numControlSamples}$, $\sample{\trajD{\control}}{1:\numControlSamples}$}
    \State $\safeSamples{\stateInit} \gets$ guaranteed safe samples from~\eqref{eq:guaranteed_safe_samples}
    \State \Return Optimal sample index as in~\eqref{eq:robust_min_cost_selection}
\EndFunction
\State
\State $\sample{\trajD{\control}}{1:\numControlSamples} \gets$ \textproc{Sample}$(\trajD{\controlRef})$
\State $\sample{\trajC{\interval{\state}}}{1:\numControlSamples} \gets$ \textproc{RolloutReachset}$(\sample{\trajD{\control}}$, $\stateInit)$ for $\sample{\trajD{\control}} \in \sample{\trajD{\control}}{1:\numControlSamples}$
\State $\sampleIndexReference \gets $\textproc{SelectBestReachset}$\big(\sample{\trajC{\interval{\state}}}{1:\numControlSamples}$, $ \sample{\trajD{\control}}{1:\numControlSamples}\big)$
\State Apply $\sample{\trajD{\control}{1}}{\sampleIndexReference}$, save $\sample{\trajD{\control}}{\sampleIndexReference}$
\end{algorithmic}
\end{algorithm}

\begin{remark}
    Though the theoretical modifications introduced by Algorithm~\ref{alg:reachset_planner} are simple, their practical implementation was only recently made possible by modern technological advances.
    The software toolbox \texttt{immrax}~\cite{immrax} leverages both interval reachability techniques designed for speed and scalability and the computational features of the JAX platform~\cite{jax2018github}.
    In particular, automatic parallelization over a modern GPU is critical for the real-time performance of Algorithm~\ref{alg:reachset_planner}, allowing all sampled control sequences to be processed simultaneously. 
\end{remark}

\begin{remark}
    \label{rmk:observability}
    The subroutine $\textproc{RolloutReachset}$ in Algorithm~\ref{alg:reachset_planner} requires observing the system's current state, which may not always be feasible. 
    In this work, all hardware experiments were conducted in an environment with optitrack cameras that reported the state very precisely. 
    However, it would also be easy to generalize the reachability-aware Algorithm~\ref{alg:reachset_planner} to account for observation error by e.g. initializing the embedding system~\eqref{eq:dyn_embedding} with an interval that contains the true state with high confidence, rather than a single point. 
\end{remark}

\subsection{Limitations}
\label{subsec:limitations}

The safety guarantee provided by Theorem~\ref{thm:robustly_safe_control} rules out many forms of failure caused by disturbances. 
However, practical limitations remain in applying this result. 
In particular, it is possible that the sampling process in Algorithm~\ref{alg:reachset_planner} does not find any safe control sequence, even if one exists.
As only finitely many sequences are sampled, and since each sequence conservatively over-approximates the effect disturbances could have on the system state, there is no guarantee that a safe control is sampled and then proven safe.
Furthermore, as a receding horizon approach, control sequences in $\safeControls{\stateInit}$ may induce system states from which safety violations cannot be prevented, as long as such violations would occur after the prediction horizon $\timeHorizon$.
That is, Algorithm~\ref{alg:reachset_planner} does not have any recursive feasibility guarantee. 
These factors explain all safety violations observed during our experiments in Section~\ref{subsec:simulation}, and are discussed further there.  

Another potential failure mode is related to the method used to compute predicted trajectories, usually using numerical integration. 
If the integrator performs poorly and significantly diverges from the true trajectory, controls that generate unsafe trajectories may fail to be detected. 
Furthermore, evaluating the infimum in~\eqref{eq:maybe_safe_samples} and~\eqref{eq:guaranteed_safe_samples} may be difficult in practice, and can be heuristically substituted with checking every discretized timestep in the numerically integrated trajectory. 
If the discretizations are too large, trajectories that are unsafe for short periods of time might be falsely considered safe. 

Finally, computing trajectories of the embedding system~\eqref{eq:dyn_embedding} requires formalizing both a bound on the possible disturbance magnitude and the precise mathematical mechanism by which they affect system dynamics. 
If the true disturbances are larger than expected, or they differ from their modeled influence on system state, the safe control set $\safeControls{\stateInit}$ no longer accurately represents which control sequences are safe to apply. 
Therefore, one must carefully choose a reasonable disturbance model before applying Algorithm~\ref{alg:reachset_planner}.

\section{Numerical Simulations and Experiments}
\label{sec:experiments}

This section evaluates Algorithm \ref{alg:reachset_planner} in simulation and on real-world hardware. We first define the vehicle dynamics and planning objectives for our test system (Section \ref{subsec:dynamics}). Next, we benchmark the algorithm's performance and robustness in simulation, including a comparison against a state-of-the-art reachability-aware planner \cite{borquezDualGuardMPPISafe2025} (Sections \ref{subsec:simulation} and  \ref{subsec:comparison}). Finally, we demonstrate the controller's real-world viability on a physical 1/28th scale racecar (Section \ref{subsec:real_world}).


\subsection{Vehicle Dynamics and Objective}
\label{subsec:dynamics}
We evaluate our controller on a racecar with mass $\carMass$ and length $\carLenFront$ ($\carLenRear$) from the center of mass to the front (rear) axle.
We use a dynamic bicycle model \cite{rajamaniVehicleDynamicsControl2012} with states $\posX$, $\posY$, $\heading$, $\velLong$, $\velLat$, and $\velHeading$.
These represent the 2D position, heading, longitudinal and lateral velocities, and yaw rate, respectively.
The control inputs are steering $\inputSteer$ and acceleration (throttle and braking) $\inputThrottle$. 
Tire slip forces are modeled using the Pacejka Magic Formula \cite{pacejkaMAGICFORMULATYRE1992}:
\begin{align}
    \tireSlipCurve{s} & := \tireSlipPeak \sin(\tireSlipShape + \arctan(\tireSlipStiffness s)), \label{eq:tire_slip_model} \\ 
    \carSlipFront & := \carLenRear \frac{\carMass \gravity}{\carLenFront + \carLenRear} \tireSlipCurve{-\arctan\left(\frac{\velHeading \carLenFront + \velLat}{\velLong}\right) + \inputSteer}, \label{eq:car_slip_front} \\
    \carSlipRear & := \carLenFront \frac{\carMass \gravity}{\carLenFront + \carLenRear} 1.15 \tireSlipCurve{\arctan\left(\frac{\velHeading \carLenRear - \velLat}{\velLong}\right)}. \label{eq:car_slip_rear} 
\end{align}
Input acceleration acts longitudinally on the vehicle according to a motor model:
\begin{equation}
    \label{eq:motor_model}
    \motorModel{\velLong, \inputThrottle} := \motorAccWeight \left(\inputThrottle - \frac{\velLong}{\motorVelEfficiency} - \motorEquilibrium\right).
\end{equation}

The complete system dynamics are therefore given by:
\begin{equation}
    \label{eq:dyn_dynamic}
    \begin{bmatrix}
        \dot{\posX} \\ 
        \dot{\posY} \\
        \dot{\heading} \\
        \dot{\velLong} \\
        \dot{\velLat} \\
        \dot{\velHeading} 
    \end{bmatrix} = 
    \begin{bmatrix}
        \velLong \cos(\heading) - (\velLat + \distVelLat) \sin(\heading) \\
        \velLong \sin(\heading) + (\velLat + \distVelLat) \cos(\heading) \\
        \velHeading + \distHeading \\
        \motorModel{\velLong, \inputThrottle} \\ 
        \carMass^{-1} (\carSlipFront + \carSlipRear - \carMass \velLong \velHeading) \\ 
        \carRotationalInertia^{-1} (\carSlipFront \carLenFront - \carSlipRear \carLenRear)
    \end{bmatrix}.
\end{equation}
Here, $\distVelLat \in \interval{\distVelLat}$ and $\distHeading \in \interval{\distHeading}$ are bounded, adversarial disturbances. These terms account for the inherent difficulty of accurately estimating tire slip and lateral velocity, a process that typically requires successive numerical differentiation of position data and is highly susceptible to sensor noise \cite{liao2019adaptive, ungoren2004lateral}. Algorithm \ref{alg:reachset_planner} robustly selects control actions that remain safe for any realization of $\distVelLat$ and $\distHeading$.

For small $\velLong$, the calculation of the slip forces $\carSlipFront$ and $\carSlipRear$ in~\eqref{eq:car_slip_front}--\eqref{eq:car_slip_rear} becomes singular. 
To ensure numerical stability, we switch to a simplified, kinematic bicycle model when longitudinal speed passes below a threshold. 
Given the \emph{slip-slide} angle
\begin{equation}
    \label{eq:slip_slide}
    \beta := \arctan\left(\frac{\carLenRear}{\carLenFront + \carLenRear} \tan(\inputSteer)\right),
\end{equation}
the kinematic model predicts dynamics
\begin{equation}
    \label{eq:dyn_kinematic}
    \begin{bmatrix}
        \dot{\posX} \\ 
        \dot{\posY} \\
        \dot{\heading} \\
        \dot{\velLong} \\
        \dot{\velLat} \\
        \dot{\velHeading} 
    \end{bmatrix} = 
    \begin{bmatrix}
        \velLong \cos(\heading + \beta) \\
        \velLong \sin(\heading + \beta) + \distVelLat \\
        \frac{1}{\carLenFront + \carLenRear} \velLong \cos(\beta) \tan(\inputSteer) + \distHeading \\
        \motorModel{\velLong, \inputThrottle} \\ 
        -\velLat \\ 
        - (\velHeading - \dot{\heading})
    \end{bmatrix}.
\end{equation}
Here, $\distVelLat$ and $\distHeading$ are used similarly as in~\eqref{eq:dyn_dynamic} to account for modeling errors. 


\begin{remark}
\label{rmk:interval_model_switching}
There are several reasonable methods to extend the model switching motivated by the singularity in~\eqref{eq:car_slip_front}--\eqref{eq:car_slip_rear} to the interval-valued embedding system. 
Whenever the interval state does \emph{not} overlap the $\velLong$ threshold, the selection between~\eqref{eq:dyn_dynamic} and~\eqref{eq:dyn_kinematic} is unambiguous. 
If there is an overlap, one option is to evaluate embedding systems for both dynamics separately and use the smallest interval containing the result of both. 
In practice, we have noticed that this is needlessly conservative, and simply using the largest value of $\velLong$ in the input interval is sufficient. 
\end{remark}

Despite using Cartesian coordinates for reachability computations, the safety and performance criteria are still most easily represented in the Frenet reference frame. 
By converting to coordinates progress $\progress$, lateral error $\errLat$, heading error $\errHeading$, velocity components $\velLong$ and $\velLat$, and angular velocity error $\errVelHeading$ (all relative to a reference raceline), we can naturally specify notions such as ``avoid collisions'' and ``drive quickly''. 

\begin{assumption}[Track Observability]
    \label{asm:track_geometry}
    We assume that during each control computation, the planner has access to the approaching track geometry up to time horizon $\timeHorizon$.
    Specifically, this information is represented as a reference raceline $\trajC{\progressRef}$, the lateral distance to the left and right track boundaries $\interval{\errLat}$, and desired longitudinal velocity $\velLongRef$. 
\end{assumption}
Assumption~\ref{asm:track_geometry} is satisfied by the standard outputs of many common online observability and localization algorithms. 
In this work, and for simplicity of presentation, we use fixed environments where such information is readily available.
Once the track geometry is known, specifying safe (non-colliding) states is simple in the Frenet frame: 
\begin{equation}
    \label{eq:frenet_no_collision}
    \safetyFn{\state} \defeq -\operatorname{dist}\left(\errLat, \interval{\trajD{\errLat}{\progress}} \right).
\end{equation}

Similarly, we design a cost function to incentivize staying near the raceline and progressing quickly along it. 
We have four running cost terms: 
\begin{align}
    \costCentering{\trajC{\state}{\timeInstant}} &= \lvert \trajC{\errLat}{\timeInstant} \rvert, \\
    \costBoundary{\trajC{\state}{\timeInstant}} &= \max \bigg\{0, \arctan\big(-100 \times \nonumber \\ 
    ( \operatorname{dist}&(\errLat, \boundary \interval{\trajD{\errLat}{\trajC{\progress}{\timeInstant}}}) + 0.05 )\big) + \frac{\pi}{2} \bigg\}, \\
    \costVelRef{\trajC{\state}{\timeInstant}} &= \left(\trajC{\velLong}{\timeInstant} - \trajD{\velLongRef}{\trajC{\progress}{\timeInstant}}\right)^{2}, \\
    \costVelNeg{\trajC{\state}{\timeInstant}} &= - \min \{ \trajC{\velLong}{\timeInstant}, 0 \},
\end{align}
and a single terminal cost $\costProgress{\trajC{\state}} = \trajC{\progress}{0} - \trajC{\progress}{\timeHorizon}$.
These terms are combined in a (positive) weighted sum to give the full cost function
\begin{align}
    \label{eq:frenet_cost}
    \costFunctional{\trajC{\state}}{\trajC{\control}} &\defeq \costWeightProgress \costProgress{\trajC{\state}} + \nonumber \\ 
    \int_{0}^{\timeHorizon} \costWeightCentering & \costCentering{\trajC{\state}{\timeInstantRef}} + \costWeightBoundary \costBoundary{\trajC{\state}{\timeInstantRef}} + \nonumber \\ 
    \costWeightVelRef & \costVelRef{\trajC{\state}{\timeInstantRef}} + \costWeightVelNeg \costVelNeg{\trajC{\state}{\timeInstantRef}} \, \mathrm{d}\timeInstantRef.
\end{align}


\begin{remark}
    \label{rmk:discretized_raceline}
    When converting from Cartesian to Frenet coordinates, and at several points while evaluating~\eqref{eq:frenet_cost}, it is necessary to identify the desired velocity $\velLongRef$ or track boundary distances $\interval{\errLat}$ corresponding to the current progress $\progress$. 
    This is easily accomplished for individual trajectories, but necessitates additional consideration for intervals. 
    In principle, and with a closed-form expression for the observability information of Assumption~\ref{asm:track_geometry}, the inclusion function machinery of Section~\ref{subsec:reachability} is sufficient. 
    For computational speed, and to more closely model how Assumption~\ref{asm:track_geometry} may be satisfied in practice, we provide track geometry as a discretized array of points and approximate the conversion of an interval by considering trajectories corresponding to each of its four vertices in the position plane separately and collecting the results. 
    Since the reachable sets do not grow too large over the prediction horizon, if none of these trajectories result in collision, it is a strong indication that the safety check in~\eqref{eq:guaranteed_safe_samples} will be satisfied. 
    We empirically validate this approach in the sequel. 
\end{remark}

\subsection{Simulations}
\label{subsec:simulation}

With the dynamics described in Section~\ref{subsec:dynamics} as a case study, we performed a simulated, parameter sweep comparison between the reachability-aware Algorithm~\ref{alg:reachset_planner} and the baseline, trajectory only Algorithm~\ref{alg:sample_planner}.
Both controllers were tasked with driving three consecutive laps around the track shown in Figure \ref{fig:hardware_overlay} under artificially injected state disturbances.
While Algorithm \ref{alg:sample_planner} operates without knowledge of these disturbances, Algorithm \ref{alg:reachset_planner} is provided valid bounds on their maximum magnitude.
Neither method was given the specific disturbance realizations \emph{a priori}.

We conducted a parameter sweep comprising 30 initial states, 9 cost term weighting combinations, and 2 types of disturbance signals (540 total runs per controller). 
Trials were terminated early upon collision (``crash'') or if the vehicle failed to advance for 10 consecutive seconds (``stall'').

The reachability-based formulation demonstrated a marked improvement in safety.
Algorithm \ref{alg:sample_planner} crashed 141 times, compared to only 26 for Algorithm \ref{alg:reachset_planner}.
Notably, 25 of the 26 crashes under Algorithm \ref{alg:reachset_planner} occurred when the vehicle was initialized precariously close to the track boundary and never sampled a safe control, strongly suggesting that the safe control set was empty ($\safeControls{\stateInit} = \emptyset$).
Excluding these unrecoverable initializations, incorporating reachability reduced the crash frequency by over 99.1\%.
We suspect that the last remaining crash was caused by a failure to guarantee recursive feasibility, as described in Section~\ref{subsec:limitations}. 

\begin{figure}
    \centering
    \includegraphics[width=\linewidth]{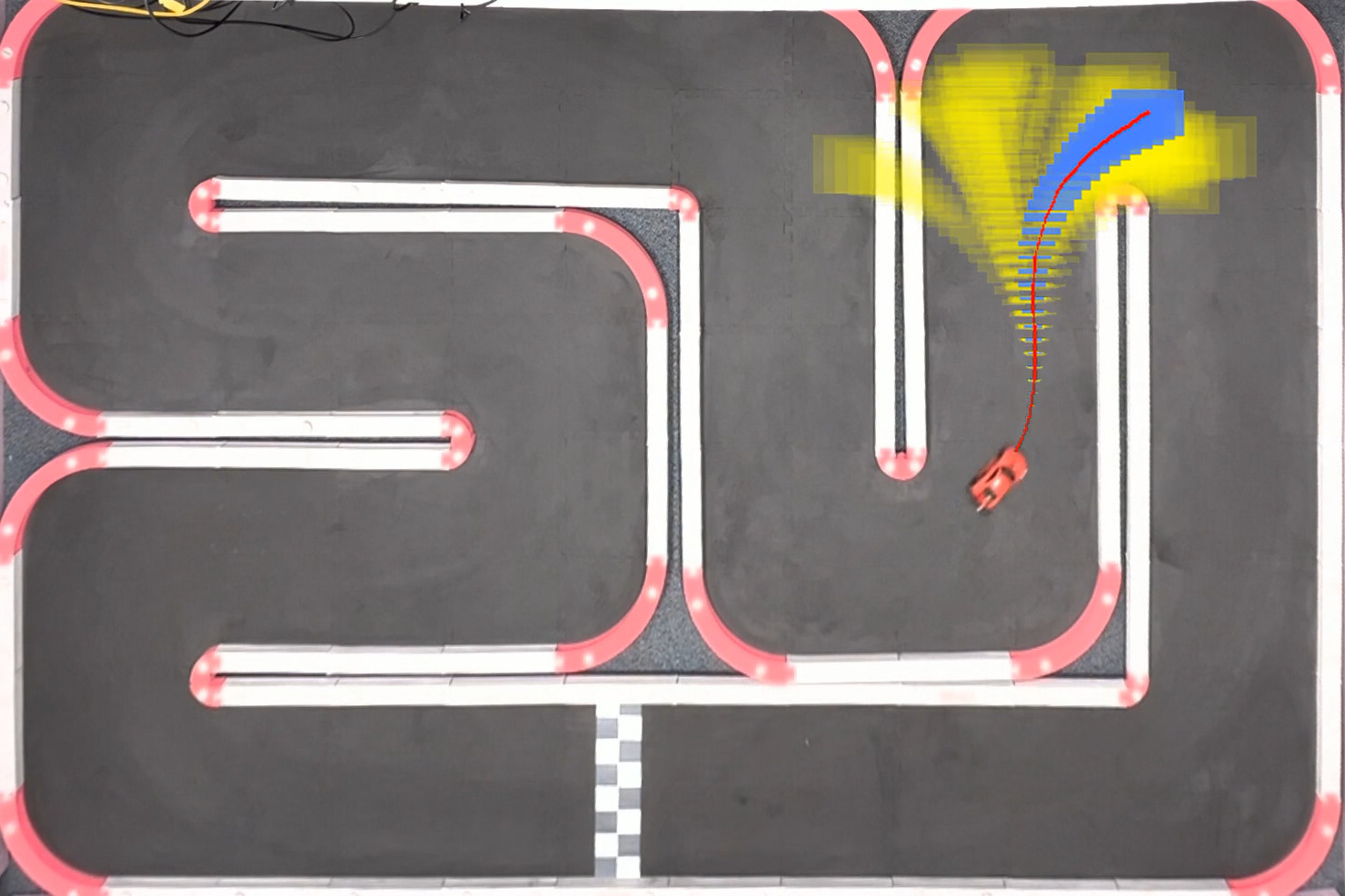}
    \caption{%
        Snapshot of Algorithm~\ref{alg:reachset_planner} being deployed on a \nicefrac{1}{28}th scale vehicle test platform.
        Computed over-approximations of the system's reachable sets are shown for a selection of the sampled controls, with the optimal control in blue and others in yellow. 
        The centerpoint trajectory of the planned reachable sets is shown in red. 
    }
    \label{fig:hardware_overlay}
\end{figure}

In addition to greatly reducing crash frequency, Algorithm~\ref{alg:reachset_planner} displays greater robustness to parameter variation than Algorithm~\ref{alg:sample_planner} (Table~\ref{tab:sweep-success-summary}).
The success rate is largely independent of disturbance realization and cost function weights, de-coupling safety guarantee from performance tuning. 
Indeed, by robustly accounting for disturbances, the safety considerations of Algorithm 2 had an emergent, positive impact on performance, resulting in universally faster lap times.


\begin{table*}
  \vspace{2mm}
  \centering
  \caption{%
      Outcome of the 1080-run sweep, broken down by sweep variable and reachable-set filter setting.
      Shaded rows are runs with the reachable-set filter enabled.
      Counts are runs with outcome \texttt{race\_finished}; lap times are means over exactly those runs, and ``---'' marks cells with no successful runs.
      Each ``Overall'' cell aggregates 540 runs, each disturbance-direction cell 270 runs, and each centering-weight and velocity-scale cell 180 runs.
  }
  \label{tab:sweep-success-summary}
  \begin{tabular}{lrrrrrrrrr}
    \toprule
    & & \multicolumn{2}{c}{Disturbance type}
      & \multicolumn{3}{c}{Centering weight $\costWeightCentering$ }
      & \multicolumn{3}{c}{Velocity scale} \\
    \cmidrule(lr){3-4} \cmidrule(lr){5-7} \cmidrule(lr){8-10}
    Reachable-set filter & Overall & Adversarial & Uniform
      & 0.1 & 0.5 & 1.0 & 1.00 & 1.25 & 1.50 \\
    \midrule
    \multicolumn{10}{l}{\textit{Successful runs}} \\
    \quad Disabled &  44 &  14 &  30 &   0 &   0 &  44 &  44 &   0 &   0 \\
    \rowcolor{gray!8}
    \quad Enabled  & 504 & 247 & 257 & 166 & 168 & 170 & 171 & 172 & 161 \\
    \addlinespace
    \multicolumn{10}{l}{\textit{Mean lap time (s)}} \\
    \quad Disabled & 5.97 & 6.02 & 5.94 & ---  & ---  & 5.97 & 5.97 & ---  & ---  \\
    \rowcolor{gray!8}
    \quad Enabled  & 5.52 & 5.57 & 5.48 & 5.73 & 5.49 & 5.36 & 5.63 & 5.42 & 5.52 \\
    \bottomrule
  \end{tabular}
\end{table*}

\subsection{Comparison}
\label{subsec:comparison}

We conducted a bi-directional, simulation-based comparison against the state-of-the-art reachability enabled planner~\cite{borquezDualGuardMPPISafe2025}. 
Our method achieves similar performance, maintains safety, and is scalable to larger system dimensions without any pre-computation. 

In~\cite[Section VI]{borquezDualGuardMPPISafe2025}, an HJ reachability-based method is used to navigate a racecar (modeled as a 3D Dubins vehicle) around a racetrack. 
We replicate this experiment in simulation using the same track geometry and system dynamics. 
Our algorithm achieves the same safety criteria (preventing all collisions) as the original work, and drives the vehicle at an average speed of 0.98 m/s. 
Compared to the baseline's reported 1.04 m/s, we incur a 6\% speed penalty. 
As discussed in~\cite{borquezDualGuardMPPISafe2025}, the rich information stored in the HJ value function is used to \emph{filter} every rollout, improving the sample efficiency of finding an optimal control and therefore performance overall. 
No such information is included in our interval reachable sets. 
However, computing this value function comes at a heavy cost as it must be evaluated and stored on a dense grid over the state space. 
On a system equipped with a Intel Xeon Gold 6230 CPU and NVIDIA Quadro RTX 8000 GPU, this took over 38 hours and the resulting file was 719.1 MB. 

Such overhead grows exponentially in system dimension, making it difficult to apply the state-of-the-art method to our 6D model from Section~\ref{subsec:dynamics}. 
To compute the value function for the system and environment pictured in Figure~\ref{fig:hardware_overlay}, it was necessary to reduce the granularity of the evaluation grid, and even then, the computation took over 115 hours on the same high-powered system, and the solution occupied 2.4 GB. 
Moreover, due to imprecision in interpolating between the stored (coarse) evaluations of the value function on our 6D vehicle, the HJ-based controller crashed before completing the same 3-lap trial developed in Secion~\ref{subsec:simulation}. 




\subsection{Experiment}
\label{subsec:real_world}

Finally, we validated Algorithm~\ref{alg:reachset_planner} on actual hardware. 
For these experiments, we used the palm-sized, remote control vehicle platform Buzzracer developed in prior work~\cite{zhangBuzzRacerPalmsizedAutonomous2024}. 
The controller was executed on a computer equipped with an Intel i9-10920X CPU and NVIDIA RTX3090 GPU, and inputs were broadcast wirelessly to the vehicle over a local area network (LAN). 
Algorithm~\ref{alg:reachset_planner} was configured to sample $\numControlSamples=1024$ control signals and predict $\predictionHorizon=30$ integration steps (of size $\controlDt=0.02$ seconds) for each sample every control loop. 

By leveraging heavy parallelization with JAX~\cite{jax2018github}, the control loop ran at nearly 60Hz. 
The vehicle completed over 35 consecutive laps on the track pictured in Figure~\ref{fig:hardware_overlay} without collision. 
These results validate both the disturbance structure chosen in~\eqref{eq:dyn_dynamic} and Algorithm~\ref{alg:reachset_planner} itself as applicable to the real world.

\section{Conclusion}
Inspired by prior work using reachability as a control filter for sampling-based MPC, we leverage fast, interval-based methods to realize full reachable sets for each sample online and guarantee safety over a receding horizon. 
With this additional information, the proposed controller was able to greatly reduce safety violations while meeting performance objectives, both in simulation and on hardware. 
Future work may attempt to further strengthen our safety guarantees by developing a backup policy to preserve recursive feasibility in the event that no certifiably safe control is sampled.
Alternatively, it could take advantage of the proposed online approach to address problems in dynamic environments or using adaptive control.



\section*{Acknowledgments}
The authors wish to thank Dr. Javier Borquez for generously providing clarifying comments on and detailed experimental data regarding~\cite{borquezDualGuardMPPISafe2025} that aided in development of the comparisons presented in Section~\ref{subsec:comparison}.



\printbibliography

\vfill

\end{document}

%% file: packages.tex
\usepackage{svg}
\usepackage{amsmath,amsfonts,amsthm}
\usepackage{mathrsfs}
\usepackage[noEnd=true]{algpseudocodex}
\usepackage{algorithm}

\usepackage{array}
\usepackage{textcomp}
\usepackage{stfloats}
\usepackage{url}
\usepackage{verbatim}
\usepackage{graphicx}
\usepackage{subcaption}
\usepackage[dvipsnames,table]{xcolor}
\definecolor{forestgreen}{RGB}{34,139,34}
\usepackage{nicefrac}
\usepackage{booktabs}
\usepackage{hyperref}

\usepackage[backend=biber, style=ieee]{biblatex}
\AtEveryBibitem{%
  \clearfield{url}%
  \clearfield{doi}%
  \clearfield{isbn}%
  \clearfield{issn}%
  \clearfield{note}%
  \clearfield{urlyear}%
  \clearfield{eprinttype}%
  \clearfield{eprint}%
  \clearlist{language}%
}
\DeclareCaseLangs{}

\theoremstyle{definition}
\newtheorem{theorem}{Theorem}
\newtheorem{proposition}{Proposition}

\newtheorem{problem}{Problem}
\newtheorem{definition}{Definition}
\newtheorem{assumption}{Assumption}
\newtheorem{remark}{Remark}

%% file: macros.tex
\makeatletter
\let\LB@pending\relax
\newcommand{\LBname}[1]{%
  \ifx\LB@pending\relax
    #1%
  \else
    \let\LB@apply\LB@pending
    \let\LB@pending\relax      
    \LB@apply{#1}%
  \fi}

\def\LBend{\LB@stop}\def\LB@stop{}
\def\LB@clear{\let\LB@pending\relax}
\newif\ifLB@found
\def\LB@ifalph#1#2#3{
  \LB@foundfalse
  \ifx#1\mathcal \LB@foundtrue\fi \ifx#1\mathbf \LB@foundtrue\fi
  \ifx#1\mathrm  \LB@foundtrue\fi \ifx#1\mathbb \LB@foundtrue\fi
  \ifx#1\mathsf  \LB@foundtrue\fi \ifx#1\mathit \LB@foundtrue\fi
  \ifx#1\mathtt  \LB@foundtrue\fi
  \ifLB@found #2\else #3\fi}
\def\LB@scan#1#2#3{
  \ifx#3\LBend
    \def\LB@next{#1{#2}}%
  \else
    \ifcat\noexpand#3a%
      \def\LB@next{\LB@scan{#1}{#2#3}}%
    \else
      \def\LB@tn{#2}%
      \ifx\LB@tn\@empty
        \def\LB@next{\LB@head{#1}#3}%
      \else
        \def\LB@next{#1{#2}\LB@clear #3}%
      \fi
    \fi
  \fi
  \LB@next}
\def\LB@head#1#2{%
  \ifcat\noexpand#2\relax
    \LB@ifalph{#2}%
      {\def\LB@hnext{\LB@alphmaybe{#1}{#2}}}
      {\def\LB@hnext{#2}}
  \else                                      
    \def\LB@hnext{#2}%
  \fi
  \LB@hnext}
\def\LB@alphmaybe#1#2{\@ifnextchar\bgroup{\LB@alphg{#1}{#2}}{#1{#2}\LB@clear}}
\def\LB@alphg#1#2#3{#1{#2{#3}}\LB@clear}

\let\LB@affix\@empty

\newcommand{\LBnameaffix}[2]{
  \ifx\LB@pending\relax
    #1#2
  \else
    \let\LB@apply\LB@pending \let\LB@pending\relax
    \def\LB@affix{#2}%
    \LB@apply{#1}%
    \let\LB@affix\@empty
  \fi}

\newcommand{\LBunderlinehead}[1]{\underline{#1}\LB@affix}
\newcommand{\LBoverlinehead}[1]{\overline{#1}\LB@affix}
\newcommand{\LBintervalhead}[1]{\left[\underline{#1}\LB@affix, \overline{#1}\LB@affix\right]}

\newcommand{\LB@compose}[1]{%
  \ifx\LB@pending\relax
    \let\LB@pending#1%
  \else
    \let\LB@oldpending\LB@pending
    \def\LB@pending##1{\LB@oldpending{#1{##1}}}
  \fi}
\newcommand{\LB@runscan}[1]{\LB@scan{\LB@pending}{}#1\LBend}

\newcommand{\LBdecoratehead}[2]{\begingroup \LB@compose{#1}\LB@runscan{#2}\endgroup}

\newcommand{\NewLetterWrapCommand}[2]{%
  \protected\def#1##1{\LBdecoratehead{#2}{##1}}}

\newcommand{\LBsubscripthead}[2]{
  \begingroup
    \def\LB@this##1{\mathnormal{##1}_{#1}}%
    \LB@compose{\LB@this}\LB@runscan{#2}%
  \endgroup}

\newcommand{\LBsuperscripthead}[2]{
  \begingroup
    \def\LB@this##1{{##1}^{#1}}%
    \LB@compose{\LB@this}\LB@runscan{#2}%
  \endgroup}
\makeatother

\NewLetterWrapCommand{\lb}{\LBunderlinehead}
\NewLetterWrapCommand{\ub}{\LBoverlinehead}
\NewDocumentCommand{\interval}{mg}{%
  \IfNoValueTF{#2}{\LBdecoratehead{\LBintervalhead}{#1}}{\left[#1, #2\right]}}

\newcommand{\componentIndex}{k}
\NewDocumentCommand{\component}{mg}{%
  \IfNoValueTF{#2}{\LBsubscripthead{\componentIndex}{#1}}{\LBsubscripthead{#2}{#1}}}

\newcommand{\sampleIndex}{i}
\newcommand{\sampleIndexReference}{i^{*}}
\NewDocumentCommand{\sample}{mg}{%
  \IfNoValueTF{#2}{\LBsuperscripthead{(\sampleIndex)}{#1}}{\LBsuperscripthead{\left(#2\right)}{#1}}}

\newcommand{\timeInstant}{t}
\newcommand{\timeInstantRef}{\tau}
\NewDocumentCommand{\trajC}{mg}{\IfNoValueTF{#2}{#1(\cdot)}{#1\left(#2\right)}}
\NewDocumentCommand{\trajD}{mg}{\IfNoValueTF{#2}{#1[\cdot]}{#1\left[#2\right]}}

\newcommand{\R}{\mathbb{R}}
\newcommand{\IR}{\mathbb{IR}}
\newcommand{\functionSet}[2]{L\left({#1}; {#2}\right)}

\newcommand{\defeq}{:=}
\newcommand{\boundary}{\partial}

\DeclareMathOperator{\clip}{clip}
\DeclareMathOperator{\dist}{dist}

\NewDocumentCommand{\dyn}{ggg}{\IfNoValueTF{#1}{\LBname{f}}{\LBname{f}\left(#1, #2, #3\right)}}

\newcommand{\timeHorizon}{T}
\newcommand{\traj}[4]{\LBname{\phi}\left(#1; #2; #3, #4\right)}
\newcommand{\embTraj}[3]{\interval{\LBname{\Phi}\left(#1; #2; #3\right)}}

\NewDocumentCommand{\safetyFn}{g}{\IfNoValueTF{#1}{\LBname{h}}{\LBname{h}\left(#1\right)}}
\newcommand{\safeSymbol}{\mathcal{S}}
\newcommand{\safeStates}{{\safeSymbol_{\state}}}
\newcommand{\safeControls}[1]{{\safeSymbol_{\control}^{\timeHorizon}}\left(#1\right)}
\newcommand{\maybeSafeSamples}[1]{\hat{\safeSymbol}_{\mathcal{I}}^{\timeHorizon}\left(#1\right)}
\newcommand{\safeSamples}[1]{\safeSymbol_{\mathcal{I}}^{\timeHorizon}\left(#1\right)}

\newcommand{\numControlSamples}{N}
\newcommand{\predictionHorizon}{M}

\newcommand{\state}{\LBname{\mathbf{x}}}
\newcommand{\stateDim}{{d_{\state}}}
\newcommand{\stateInit}{\state_{\mathrm{init}}}

\newcommand{\control}{\LBname{\mathbf{u}}}

\newcommand{\controlRef}{\control^{*}}
\newcommand{\controlDim}{{d_{\control}}}
\newcommand{\controlDt}{\Delta_{\control} \timeInstant}
\newcommand{\controlDiff}{\Delta \control}

\newcommand{\err}{\LBname{\mathbf{w}}}
\newcommand{\errDim}{{d_{\err}}}

\newcommand{\gravity}{g}

\newcommand{\carLenFront}{\ell_{\mathrm{f}}}
\newcommand{\carLenRear}{\ell_{\mathrm{r}}}
\newcommand{\carSlipFront}{F_{\mathrm{f}}}
\newcommand{\carSlipRear}{F_{\mathrm{r}}}
\newcommand{\carMass}{m}
\newcommand{\carRotationalInertia}{I_z}

\newcommand{\motorModel}[1]{m_{\mathrm{acc}}\left(#1\right)}
\newcommand{\motorAccWeight}{C_a}
\newcommand{\motorVelEfficiency}{C_v}
\newcommand{\motorEquilibrium}{C_e}

\newcommand{\tireSlipCurve}[1]{t\left(#1\right)}
\newcommand{\tireSlipShape}{C}
\newcommand{\tireSlipPeak}{D}
\newcommand{\tireSlipStiffness}{B}

\newcommand{\posX}{x}
\newcommand{\posY}{y}
\newcommand{\heading}{\phi}
\newcommand{\velLong}{v_{\mathrm{long}}}
\newcommand{\velLongRef}{v_{\mathrm{long}}^{*}}
\newcommand{\velLat}{v_{\mathrm{lat}}}
\newcommand{\velHeading}{\omega}

\newcommand{\progress}{s}
\newcommand{\progressRef}{\progress^{*}}
\newcommand{\errLat}{\LBnameaffix{e}{_{\mathrm{lat}}}}
\newcommand{\errHeading}{\LBnameaffix{e}{_{\mathrm{\phi}}}}
\newcommand{\errVelHeading}{\LBnameaffix{e}{_{\mathrm{\omega}}}}

\newcommand{\inputThrottle}{u_{\mathrm{acc}}}
\newcommand{\inputSteer}{u_{\mathrm{steer}}}

\newcommand{\distVelLat}{\LBnameaffix{w}{_{\velLat}}}
\newcommand{\distHeading}{\LBnameaffix{w}{_{\heading}}}

\NewDocumentCommand{\costFunctional}{gg}{\IfNoValueTF{#1}{\LBname{J}}{\LBname{J}\left(#1, #2\right)}}
\NewDocumentCommand{\costFunctionalInt}{gg}{\IfNoValueTF{#1}{\LBname{J_{\mathrm{int}}}}{\LBname{J_{\mathrm{int}}}\left(#1, #2\right)}}
\newcommand{\costCentering}[1]{\costFunctional_{\mathrm{center}} \left( #1 \right)}
\newcommand{\costBoundary}[1]{\costFunctional_{\mathrm{bnd}} \left( #1 \right)}
\newcommand{\costVelRef}[1]{\costFunctional_{\mathrm{velR}} \left( #1 \right)}
\newcommand{\costVelNeg}[1]{\costFunctional_{\mathrm{velN}} \left( #1 \right)}
\newcommand{\costProgress}[1]{\costFunctional_{\mathrm{prog}} \left( #1 \right)}

\newcommand{\costWeightCentering}{\gamma_{\mathrm{center}}}
\newcommand{\costWeightBoundary}{\gamma_{\mathrm{bnd}}}
\newcommand{\costWeightVelRef}{\gamma_{\mathrm{velR}}}
\newcommand{\costWeightVelNeg}{\gamma_{\mathrm{velN}}}
\newcommand{\costWeightProgress}{\gamma_{\mathrm{prog}}}

\DeclareMathOperator*{\argmin}{\arg\,\min}

\newcommand{\flatten}[2]{\component{\LBname{\mathcal{B}}\left(#2\right)}{#1}}